\newcommand\redsout{\bgroup\markoverwith{\textcolor{red}{\rule[0.5ex]{2pt}{0.4pt}}}\ULon}
\newcommand{\bn}{\mathbf n}
\newcommand{\bo}{\mathbf o}
\newcommand{\by}{\mathbf y}
\newcommand{\cO}{\mathcal{O}}
\newcommand{\cX}{\mathcal{X}}
\newcommand{\mR}{\mathbb{R}}
\newtheorem{thm}{Theorem}
\newtheorem{prob}{Problem}
\newtheorem{rem}{Remark}
\newcommand{\traj}{{\tau}}
\begin{document}
\title{Filtering for Aggregate Hidden Markov Models with Continuous Observations}
\author{Qinsheng Zhang, Rahul Singh, and Yongxin Chen
    \thanks{This work was supported by the NSF under grant 1901599, 1942523 and 2008513.
    }
    \thanks{Q. Zhang is with the Machine Learning Center, Georgia Institute of Technology, Atlanta, GA, USA. {\tt\small qzhang419@gatech.edu}. R. Singh, and Y. Chen are with the School of Aerospace Engineering, Georgia Institute of Technology, Atlanta, GA, USA. {\tt\small \{rasingh,yongchen\}@gatech.edu}}
}
\maketitle

\begin{abstract}
We consider a class of filtering problems for large populations where each individual is modeled by the same hidden Markov model (HMM). In this paper, we focus on aggregate inference problems in HMMs with discrete state space and continuous observation space. The continuous observations are aggregated in a way such that the individuals are indistinguishable from measurements. We propose an aggregate inference algorithm called continuous observation collective forward-backward algorithm. It extends the recently proposed collective forward-backward algorithm for aggregate inference in HMMs with discrete observations to the case of continuous observations. The efficacy of this algorithm is illustrated through several numerical experiments.
\end{abstract}

\section{Introduction}\label{sec:intro}

Hidden Markov Models~(HMMs) \cite{Mur12} are widely used in the systems and control community to model dynamical systems in areas such as robotics, navigation, and autonomy. An HMM has two major components, a Markov process that describes the evolution of the true state of the system and a measurement process corrupted by noise. One critical task in HMMs is to reliably estimate the state using the available noisy measurements, known as inference or filtering. Over the years, many filtering algorithms have been proposed. The most well-known one could be the Kalman Filter~\cite{WelBis95}, which is optimal for HMMs with Gaussian transition probability and Gaussian measurement noise. 
Several other well-known algorithms include the forward-backward algorithm \cite{Mur12} for discrete time discrete state HMMs and the Wonham algorithm \cite{Won64} for continuous time discrete state HMMs with Gaussian measurement noise.

Recently a new class of filtering problems for large populations with aggregate measurements has attracted much attention~\cite{SheDie11,HasRinChe19,SinHaaZha20}. 
In these filtering problems with aggregate observations, instead of individual measurements, only aggregate population-level data in the form of counts or contingency table is available~\cite{SheDie11}. 
Such scenarios may occur due to privacy or economic reasons.
For instance, in the study of animal flocking, it is too expensive, if not impossible, to track each individual. Aggregate measurements from surveillance cameras or other sensors are instead used. Another timely example is the modeling of pandemic. In such studies, the goal is to use available testing data to predict the evolution of the pandemic. Since the test results are often anonymized, they form aggregate measurements.

Filtering for large population is a difficult task. The lack of individual measurement makes it even more challenging, and most of the standard algorithms such as the forward-backward algorithm for HMMs are no longer applicable. 
A recent framework developed to address aggregate inference problems is the collective graphical models~(CGMs). Briefly, a CGM is a graphical model for the collective dynamics of a population.
Within the CGM framework, several algorithms have been proposed for aggregate inference including approximate MAP inference~\cite{SheSunKumDie13}, non-linear belief propagation~\cite{SunSheKum15} and Bethe-regularized dual averaging~\cite{VilBelShe15}. Several other recent works on filtering with aggregate observations include \cite{CheKar18,Zen19,SinHaaZha20,KimMeh20}.

In~\cite{HasRinChe19,SinHaaZha20} we proposed a novel algorithm, Sinkhorn belief propagation (SBP), for addressing aggregate filtering problems. It is built upon the connection \cite{HaaRinCheKar20,HasSinZhaChe20} between multi-marginal optimal transport (MOT)~\cite{Nen16,Pas12,BenCarCut15} and inference in probabilistic graphical models \cite{KolFri09} with fixed marginal constraints. When used in aggregate inference problems for HMMs, it is under the name collective forward-backward (CFB) algorithm to reflect its similarity and connections to the standard forward-backward algorithm. However, all the previous mentioned methods for aggregate inference only account for discrete observations. 
The goal of this work is to establish a counterpart of collective forward-backward (CFB) algorithm that is applicable to aggregate HMMs with continuous observations. 

In this work, we propose continuous observation collective forward-backward algorithm~(CO-CFB) to achieve this goal. 
The CO-CFB is closely related to the CFB algorithm for HMMs with discrete observations. The latter can be readily extended to HMMs with continuous observation but it leads to an algorithm which is not directly implementable. Rewriting a key step in it in terms of expectation circumvents this difficulty, and
we arrive at the CO-CFB that only uses samples from the aggregate observations.
Just like CFB, CO-CFB also exhibits convergence guarantees.
We remark that although CO-CFB is developed for HMMs, it can be extended to more general graphs.

The rest of the paper is organized as follows. In Section \ref{sec:background}, we review basis concepts in HMMs, aggregate inference and a previous algorithm for aggregate inference. In Section \ref{sec:main}, we present the CO-CFB algorithm, where several properties of it and its connections to other algorithms are also discussed. 
We demonstrate the efficacy of our algorithm using several numerical examples in Section \ref{sec:exp}. This is followed by a concise conclusion in Section \ref{sec:conclusion}.

\section{Background}
\label{sec:background}
In this section, we provide a brief introduction to HMMs and present existing results for aggregate inference in HMMs including the CFB algorithm.

\subsection{Hidden Markov Models} \label{subsec:hmm}
An HMM is a Markov process accompanied by observation noise. It consists of a sequence of unobservable states $X_t$ (also known as hidden states) and another sequence $O_t$ whose behavior depends on $X_t$. Here $X_t$ and $O_t$ are random variables taking values from $\cX$ and $\cO$ respectively.  In general, both $\cX$ and $\cO$ can be either finite sets or infinite sets. In this work, we assume $\cX$ to be a set with $d$ elements throughout. 

An HMM is characterized\footnote{Here we assume that the HMM is time-homogeneous, that is, transition and observation probabilities are time-invariant. This is just for notational convenience. All the results discussed in this paper apply to time-varying HMMs.} by the distribution $\pi(x_1)$ of the starting state $X_1$, the transition probability $p(x_{t+1} | x_t)$, and the observation model $p(o_t | x_t)$. The joint probability distribution over trajectories of $T$ steps reads
\begin{equation}
    \label{eq:hmm_p}
    p(\traj) = \pi(x_1) \prod_{t=1}^{T-1}p(x_{t+1}|x_t)\prod_{t=1}^Tp(o_t | x_t),
\end{equation}
where $x_t\in \cX$ denotes a realization of $X_t$, $o_t\in \cO$ denotes a realization of $O_t$, and $\traj=\{x_1,o_1,\cdots,x_t,o_t,\cdots,x_T,o_T\}$ denotes a sample trajectory.
A HMM can also be understood as a special case of probabilisitic graphical models \cite{KolFri09}; its graphical structure is depicted in Figure \ref{fig:single-hmm}. 
\begin{figure}[h]
        \centering
        \begin{tikzpicture}[scale=0.6,, transform shape,darkstyle/.style={circle,draw,fill=gray!30,minimum size=20}]

            \foreach \x in {1,...,1}
                {
                    \foreach \z in {1,...,4}
                        {
                            \node [darkstyle] (o\x\z) at (2*\z,0,0) {$o_{\z}$};
                            \node [circle,draw=black] (x\x\z) at (2*\z,2,0) {$x_{\z}$};
                            \path [draw,->] (x\x\z) edge (o\x\z);
                        }
                }
            \foreach \x in {1,...,1}
                {
                    \foreach \z in {1,...,3}
                        {\pgfmathtruncatemacro{\label}{\z+1}
                            \draw[->] (x\x\z)--(x\x\label);
                        }
                }
\end{tikzpicture}
\caption{Graphical representation of HMMs with $T=4$.}
\label{fig:single-hmm}
\end{figure}
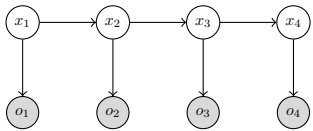

One of the most important problems in HMMs is Bayesian inference where the goal is to calculate the posterior distributions of the hidden states $X_t$ given a sequence of observations. This is also known as filtering~\cite{Mur12} in systems and control community.
A well-known and efficient algorithm for this task is the standard forward-backward algorithm (summarized in Algorithm \ref{alg:standard_forward_backward}). It relies on the following factorization 
\begin{align}
    \label{eq:fb-update}
    p(X_t = x_t|o_{1:T}) & \propto p({o}_{1:t-1} | x_t) p({o}_{t} | x_t) p({o}_{t+1:T} | x_t) p(x_t) \nonumber \\
                         & = p({o}_{t} | x_t) p({o}_{1:t-1}, x_t)  p({o}_{t+1:T} | x_t)
\end{align}
of the posterior distribution.
For fixed observations $o_{1:T}$, the quantities $\alpha_t(x_t)=p(x_t,{o}_{1:t-1})$ and $\beta_t(x_t) = p({o}_{t+1:T} | x_t)$ satisfy the recursive formulas 
\begin{subequations}\label{eq:forward_backwardst}
    \begin{eqnarray}
        \alpha_t(x_t) &=& \sum_{x_{t-1}} p(x_t|x_{t-1}) \alpha_{t-1}(x_{t-1}) p({o}_{t-1}|x_{t-1})
        \label{eq:forward_standard}\\
        \beta_t(x_t) &=& \sum_{x_{t+1}} p(x_{t+1}|x_t)  \beta_{t+1}(x_{t+1}) p({o}_{t+1}|x_{t+1})
        \label{eq:backward_standard}
    \end{eqnarray}
\end{subequations}
with boundary conditions $\alpha_1(x_1) = \pi(x_1)$ and $\beta_T(x_T) = 1$. The posterior distribution of $X_t$ is then calculated through
	\begin{equation}\label{eq:posterior}
		p(X_t = x_t|o_{1:T}) \propto \alpha_t(x_t)\beta_t(x_t)p(o_t | x_t).
	\end{equation}

\begin{algorithm}[t]
    \caption{Forward-Backward Algorithm}
    \label{alg:standard_forward_backward}
    \begin{algorithmic}
        \STATE Initialize the messages $\alpha_t(x_t), \beta_t(x_t)$ with $\alpha_1(x_1) = \pi(x_1)$ and $\beta_T(x_T) = 1$
        \STATE \textbf{Forward pass:}
        \FOR{$t = 2,3,\ldots,T$}
        \STATE Update $\alpha_t(x_t)$ using \eqref{eq:forward_standard}
        \ENDFOR
        \STATE \textbf{Backward pass:}
        \FOR{$t = T-1,\ldots,1$}
        \STATE Update $\beta_t(x_t)$ using \eqref{eq:backward_standard}
        \ENDFOR
    \end{algorithmic}
\end{algorithm}

\subsection{Aggregate Hidden Markov Models} \label{subsec:a-hmm}
Aggregate HMMs \cite{SunSheKum15,SinHaaZha20} deal with the problems of estimating the behavior of a collection of indistinguishable HMMs based on aggregate observations. Assume the observation space $\cO$ is a finite set. Given $M$ trajectories $\{\traj^{(1)},\cdots, \traj^{(M)}\}$ with $\traj^{(m)}=\{x_1^{(m)},o_1^{(m)},\cdots,x_T^{(m)},o_T^{(m)}\}$, sampled from the same HMM, the associated aggregate HMM is concerned with
the aggregate quantities $\bn=\{\bn_{t},\tilde{\bn}_{t},\bn_{tt},\tilde{\bn}_{tt}\}$ over the entire population defined by 
\begin{subequations}\label{eq:n_dist}
    \begin{align}
\tilde{n}_{tt}(x,o) = & \sum_{m=1}^M \mathbb{I}[x_t^{(m)}= x, o_t^{(m)}= o], ~ t \in \{1,\cdots, T\}\\
n_{tt}(x,x') = & \sum_{m=1}^M \mathbb{I}[x_t^{(m)}= x, x_{t+1}^{(m)}= x'], ~ t \in \{1,\cdots, T-1\}\\
n_t(x) = & \sum_{m=1}^{M} \mathbb{I}[x_t^{(m)}=x], ~ t \in \{1,\cdots,T\}\\
\tilde{n}_{t}(o) =& \sum_{m=1}^{M} \mathbb{I}[o_t^{(m)}=o],~ t \in \{1,\cdots,T\},
    \end{align}
\end{subequations}
where $\mathbb{I}$ denotes the indicator function. These quantities represent the counts of $M$ realizations of HMM taking specific values. Clearly, $\bn\in\mathbb{L}_M^{\mathbb{Z}}$, the \textit{integer-valued scaled local polytope} \cite{SunSheKum15}, that is, the entries of $\bn$ are all integers and they satisfy the following constraints
\begin{subequations}\label{eq:nconstraints}
    \begin{align}
& \sum_{o \in \cO} \tilde{n}_{t}(o) = M,\quad \sum_{x \in \cX} n_{t}(x)=M \quad\forall t \in \{1,\cdots, T\} \label{eq:M_cons_simplex} \\
& \sum_{x \in \cX} n_{tt}(x,x_{t+1})=n_{t+1}(x_{t+1}), \nonumber \\ 
& \sum_{x \in \cX} n_{tt}(x_t,x)=n_t(x_t), \quad\forall t \in \{1,\cdots, T-1\}  \label{eq:cons_tt_t} \\
& \sum_{o \in \cO} \tilde{n}_{tt}(x,o)=n_t(x), \quad\forall t \in \{1,\cdots, T\}                \label{eq:cons_tilde_tt_t}                   \\
         & \sum_{x \in \cX} \tilde{n}_{tt}(x,o)=\tilde{n}_{t}(o), \quad\forall t \in \{1,\cdots,T\}\label{eq:cons_tilde_tt_o}.
    \end{align}
\end{subequations}
The HMMs are aggregate in the sense that they are indistinguishable to each other. In this setting, the observations\footnote{A slightly different observation has been considered in \cite{SunSheKum15}.} are $\by_t$ with $y_t(o)$ denoting the number of trajectories such that $o_t^{(m)} = o$, imposing the constraints $\tilde \bn_t = \by_t$ for $t=1,\ldots,T$. 
An example of aggregate HMM is illustrated in Figure \ref{fig:aggregate-hmm}.
\begin{figure}
        \centering
        \begin{tikzpicture}[scale=0.55,, transform shape,darkstyle/.style={circle,draw,fill=gray!30,minimum size=20}]
            \foreach \z in {1,...,4}
                {
                    \node [circle,fill=blue!30] (y\z) at (8,-3,-2*\z) {$\by_{\z}$};
                    \node [circle,fill=red!50] (n\z) at (0,4,-2*\z) {$\bn_{\z}$};
                }

            \foreach \x in {1,...,4}
                {
                    \draw[fill=gray!05] (2*\x,0-1,1) -- (2*\x, 0-1,-9) -- (2*\x, 2+1,-9) -- (2*\x, 2+1,1) -- cycle;
                    \foreach \z in {1,...,4}
                        {
                            \node [darkstyle] (o\x\z) at (2*\x,0,-2*\z) {$o_{\z}^{(\x)}$};
                            \node [circle,draw=black] (x\x\z) at (2*\x,2,-2*\z) {$x_{\z}^{(\x)}$};
                            \path [draw,->] (x\x\z) edge (o\x\z);
                        }
                    \draw[->,bend right] (o\x1) to (y1);
                    \draw[->,bend right] (x\x1) to (n1);
                }

            \foreach \x in {1,...,4}
                {
                    \foreach \z in {1,...,2}
                        {
                            \draw[->,bend right] (o\x\z) to (y\z);
                            \draw[->,bend right] (x\x\z) to (n\z);
                        }
                }

            \foreach \x in {1,...,4}
                {
                    \foreach \z in {1,...,3}
                        {\pgfmathtruncatemacro{\label}{\z+1}
                            \draw[->] (x\x\z)--(x\x\label);
                        }
                }

            \foreach \x in {3,...,8}
                {
                    \node[mark size=1pt,color=black] at (8,-2,-\x) {\pgfuseplotmark{*}};
                    \node[mark size=1pt,color=black] at (0,3.3,-\x -2) {\pgfuseplotmark{*}};
                }
\end{tikzpicture}
\caption{An example of aggregate HMM with $M=4$. Each vertical plane stands for one sample trajectory from the underlying HMM. 
The blue nodes represent aggregate observations, and red nodes represent aggregate hidden states.}
 \label{fig:aggregate-hmm}
\end{figure}
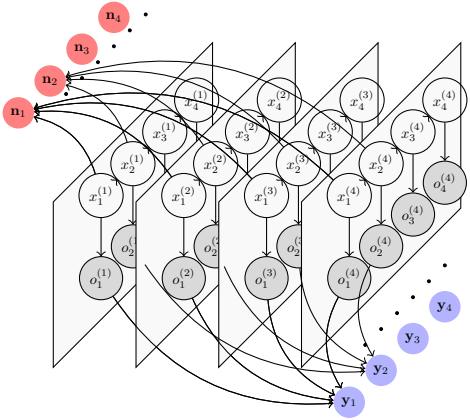

The goal of inference in aggregate HMMs is to estimate the most likely $\bn$ given the aggregate measurements \{$\by_t$\}. The exact inference is proved to be computationally infeasible~\cite{SheDie11} for problems with large $T$ and $M$. 
It is proposed in \cite{SinHaaZha20} that this aggregate inference can be approximately achieved by solving a free energy minimization problem. Moreover, the approximation error vanishes as the size $M$ of the population goes to infinity. 
For the sake of simplicity and without loss of generality, we normalize the observation $\by_t$ as well as the statistics $\bn$ by population size $M$, yielding a modification on Equation~\eqref{eq:M_cons_simplex}
\begin{equation}
    \label{eq:cons_simplex}
    \sum_{o \in \cO} \tilde{n}_{t}(o) = 1,\quad \sum_{x \in \cX} n_{t}(x)=1 \quad\forall t \in \{1,\cdots, T\}.
\end{equation}
Denote the local polytope (without integer constraints) described by Equation~\eqref{eq:cons_simplex}-\eqref{eq:cons_tt_t}-\eqref{eq:cons_tilde_tt_t}-\eqref{eq:cons_tilde_tt_o} by $\mathbb{M}$~\cite{WaiJor08}, and define the free energy term
\begin{align} \label{eq:bethe}
    \mathcal{F}(\bn) = 
    &- \sum_{t=1}^T \sum_{x_t,o_t} \tilde{n}_{tt}(x_t,o_t) \log p(o_t | x_t) \\ \nonumber
    &- \sum_{t=1}^{T-1} \sum_{x_t,x_{t+1}} {n}_{tt}(x_t,x_{t+1}) \log p(x_{t+1} | x_t)  \\ \nonumber
    &- \sum_{x_1} n_{1}(x_1) \log \pi(x_1) -\sum_{x_1} n_1(x_1)\log n_1(x_1) \\ \nonumber
    & + \sum_{t=1}^T \sum_{x_t,o_t} \tilde{n}_{tt}(x_t,o_t) \log \tilde{n}_{tt}(x_t,o_t) \\ \nonumber
    &+ \sum_{t=1}^{T-1} \sum_{x_t,x_{t+1}} {n}_{tt}(x_t,x_{t+1}) \log {n}_{tt}(x_t,x_{t+1})\\ \nonumber
    & - 2\sum_{t=2}^{T-1}\sum_{x_t} n_t(x_t) \log n_t(x_t) -\sum_{x_T}n_T(x_T)\log n_T(x_T),
\end{align}
then the aggregate inference problem is equivalent \cite{SinHaaZha20} to the following convex optimization problem
\begin{prob}\label{prob:bethe}
    \begin{subequations} \label{eq:argmin-bethe}
        \begin{eqnarray}
            \min_{\bn\in \mathbb{M}} && \mathcal{F}(\bn)\\
            \text{ subject to} && \tilde{\bn}_t = \by_t, \quad \forall t\in\{1,\cdots,T\}.
        \end{eqnarray}
    \end{subequations}
\end{prob}

\subsection{Collective forward-backward algorithm} \label{subsec:afb}

An efficient algorithm known as collective forward-backward~(CFB) algorithm (Algorithm \ref{alg:forward_backward}) was proposed in \cite{SinHaaZha20} to address Problem \ref{prob:bethe}. It leverages an elegant connection between multi-marginal optimal transport~\cite{Nen16,Pas12,BenCarCut15} and this inference problem with fixed marginal constraints. We refer the reader to~\cite{SinHaaZha20} for more details on Collective Forward-backward algorithm and its connections to multi-marginal optimal transport. The solution to Problem \ref{prob:bethe} is characterized by the following theorem.
\begin{thm}[{\cite[Corollary 1]{SinHaaZha20}}] \label{thm:sbp_hmm}
    The solution to the filtering problem (Problem \ref{prob:bethe}) for aggregate HMM is
    \begin{equation}\label{eq:statemarginals}
        n_t(x_t) \propto \alpha_t(x_t) \beta_t(x_t) \gamma_t(x_t),~~ \forall t =1,2\ldots,T
    \end{equation}
    where $\alpha_t(x_t), \beta_t(x_t)$, $\gamma_t(x_t)$, and $\xi_t(o_t)$ are the fixed points of the following updates
    \begin{subequations}\label{eq:d_forward_backward}
        \begin{eqnarray}
            \alpha_t(x_t) = \sum_{x_{t-1}} p(x_t|x_{t-1}) \alpha_{t-1} (x_{t-1}) \gamma_{t-1}(x_{t-1}), \label{eq:d_forward_backward1}  \\
            \beta_t(x_t) = \sum_{x_{t+1}} p(x_{t+1}|x_{t}) \beta_{t+1} (x_{t+1}) \gamma_{t+1}(x_{t+1}), \label{eq:d_forward_backward2} \\
            \gamma_t(x_t) = \sum_{o_{t}} p(o_{t}|x_{t}) \frac{y_t(o_t)}{\xi_t(o_t)}, \label{eq:d_forward_backward3} \\
            \xi_t(o_t) = \sum_{x_{t}} p(o_{t}|x_{t}) \alpha_{t} (x_{t}) \beta_{t}(x_{t}), \label{eq:d_forward_backward4}
        \end{eqnarray}
    \end{subequations}
    with boundary conditions
    \begin{equation}
        \alpha_1(x_1) = \pi(x_1) \quad \text{and} \quad \beta_T(x_T) = 1.
    \end{equation}
\end{thm}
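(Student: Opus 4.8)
The plan is to treat Problem \ref{prob:bethe} as a convex program and characterize its minimizer through the Karush--Kuhn--Tucker (KKT) stationarity conditions. The objective $\mathcal{F}$ is the Bethe free energy of the HMM, which on a chain (a tree) coincides with the exact variational free energy; its entropy part is concave and its energy part is linear, so $\mathcal{F}$ is convex, and the feasible set cut out by \eqref{eq:cons_simplex}, \eqref{eq:cons_tt_t}, \eqref{eq:cons_tilde_tt_t}, \eqref{eq:cons_tilde_tt_o} together with $\tilde{\bn}_t = \by_t$ is a polytope. Hence stationarity of the Lagrangian is both necessary and sufficient for optimality. First I would introduce Lagrange multipliers for each family of linear constraints: one $\lambda_t$ for each normalization $\sum_x n_t(x)=1$; a pair of multipliers for the two marginalization identities in \eqref{eq:cons_tt_t} on each transition edge; one multiplier for the state-marginal identity \eqref{eq:cons_tilde_tt_t} on each observation edge; and one for \eqref{eq:cons_tilde_tt_o} after substituting $\tilde{\bn}_t=\by_t$.

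Next I would write the stationarity equations by differentiating the Lagrangian with respect to the edge variables $n_{tt}(x_t,x_{t+1})$ and $\tilde{n}_{tt}(x_t,o_t)$ and the node variables $n_t(x_t)$. Each edge derivative has the form $\log(\text{variable}) - \log p + 1 - (\text{incident multipliers})$, so setting it to zero expresses every pairwise marginal as the factor $p$ times a product of one exponentiated multiplier per incident node. The key step is then a change of variables: I would define $\alpha_t$, $\beta_t$, $\gamma_t$ as specific products of these exponentiated multipliers, and $\xi_t$ through \eqref{eq:d_forward_backward4}, chosen so that the transition edge reads $n_{tt}(x_t,x_{t+1}) \propto p(x_{t+1}|x_t)\,\alpha_t(x_t)\gamma_t(x_t)\,\beta_{t+1}(x_{t+1})\gamma_{t+1}(x_{t+1})$ and the observation edge reads $\tilde{n}_{tt}(x_t,o_t) \propto p(o_t|x_t)\,\alpha_t(x_t)\beta_t(x_t)\,y_t(o_t)/\xi_t(o_t)$.

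With these forms in hand, imposing the marginalization constraints reproduces the updates in \eqref{eq:d_forward_backward}. Summing the transition edge over $x_t$ yields the forward recursion \eqref{eq:d_forward_backward1}, and summing it over $x_{t+1}$ yields the backward recursion \eqref{eq:d_forward_backward2}; summing the observation edge over $x_t$ enforces \eqref{eq:cons_tilde_tt_o} and identifies $\xi_t$ as in \eqref{eq:d_forward_backward4}, while summing over $o_t$ and comparing with \eqref{eq:cons_tilde_tt_t} produces \eqref{eq:d_forward_backward3}. Both marginalizations of either edge return the same node belief $n_t(x_t)\propto\alpha_t(x_t)\beta_t(x_t)\gamma_t(x_t)$, which is the claimed \eqref{eq:statemarginals}. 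The boundary conditions fall out of the endpoint terms: at $t=1$ the extra $-n_1\log\pi$ energy together with the single (rather than doubled) entropy forces $\alpha_1=\pi$, and at $t=T$ the single entropy forces $\beta_T=1$.

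The hard part, and the point needing the most care, is the node stationarity condition together with the counting numbers in $\mathcal{F}$. An interior node carries the coefficient $-2$ in $-2\sum_{x_t} n_t\log n_t$, reflecting its degree three in the graph, so its stationarity reads $-2\log n_t(x_t) + (\text{sum of its three incident multipliers}) + \lambda_t = 0$. I must verify that this is not an independent equation but is automatically consistent with the product form: the three incident exponentiated multipliers multiply to $(\alpha_t\beta_t\gamma_t)^2$, and the factor $2$ is exactly what is needed to take the square root and recover $n_t\propto\alpha_t\beta_t\gamma_t$. Checking that this over-counting cancels correctly at interior nodes, and that the non-doubled endpoint nodes instead pin down $\alpha_1=\pi$ and $\beta_T=1$, is the delicate bookkeeping at the heart of the argument; the remaining steps are routine substitutions.
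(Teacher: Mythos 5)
Your proposal is correct and takes essentially the same route as the paper: the paper's own argument (given in Appendix~\ref{app:theorem2} for the continuous-observation analog, Theorem~\ref{thm:co-forward-backward}, and identical here with sums replacing integrals) is exactly this Lagrangian/KKT derivation, where your exponentiated-multiplier change of variables corresponds to the paper's $e^{-a_t(x_t)}\propto\alpha_t(x_t)\beta_t(x_t)$, $e^{-c_t(x_t)}\propto\beta_t(x_t)\gamma_t(x_t)$, $e^{-d_t(x_t)}\propto\alpha_t(x_t)\gamma_t(x_t)$, and the interior counting number $-2$ is resolved by the same square-root step $n_t(x_t)\propto e^{-(a_t(x_t)+c_t(x_t)+d_t(x_t))/2}$. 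Your identification of the observation-edge multiplier as $y_t(o_t)/\xi_t(o_t)$ and your endpoint bookkeeping forcing $\alpha_1=\pi$, $\beta_T=1$ likewise match the paper's \eqref{eq:exp_b} and its treatment of the $t=1,T$ cases.
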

\begin{algorithm}[h]
   \caption{Collective Forward-Backward (CFB) Algorithm}
   \label{alg:forward_backward}
\begin{algorithmic}
   \STATE Initialize all the messages $\alpha_t(x_t), \beta_t(x_t), \gamma_t(x_t), \xi_t(o_t)$ 
   \WHILE{not converged}
   \STATE \textbf{Forward pass:}
   \FOR{$t = 2,3,\ldots,T$}
        \STATE i) Update  $\gamma_{t-1}(x_{t-1})$ 
        \STATE ii) Update $\alpha_t(x_t), \xi_t(o_t)$
    \ENDFOR
    \STATE \textbf{Backward pass:}
    \FOR{$t = T-1,\ldots,1$}
        \STATE i) Update  $\gamma_{t+1}(x_{t+1})$
        \STATE ii) Update $\beta_t(x_t), \xi_t(o_t)$
    \ENDFOR
    \ENDWHILE
\end{algorithmic}
\end{algorithm}
Figure \ref{fig:alg-afb} illustrates $\alpha,\beta,\gamma, \xi$ from Theorem \ref{thm:sbp_hmm} in an HMM. By comparing \eqref{eq:forward_backwardst}-\eqref{eq:posterior} with \eqref{eq:statemarginals}-\eqref{eq:d_forward_backward}, it is easy to see that the Forward-backward algorithm (Algorithm \ref{alg:standard_forward_backward}) is a special case of the Collective Forward-backward algorithm for one trajectory, namely, $M=1$. 
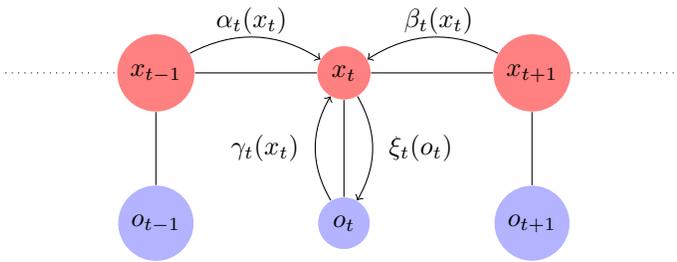
\begin{figure} \label{fig:hmm-message}
    \centering
    \begin{tikzpicture}[scale=1.0,, transform shape,darkstyle/.style={circle,draw,fill=gray!30,minimum size=30,maximum size=30}]
        \node [circle,fill=red!50] at (2,0) (b2) {$x_{t-1}$};
        \node [circle,fill=red!50] at (4.5,0) (b3) {$x_t$};
        \node [circle,fill=red!50] at (7,0) (b4) {$x_{t+1}$};

        \draw[dotted] (0,0) to (b2);
        \draw[solid] (b2) to (b3);
        \draw[solid] (b3) to (b4);
        \draw[dotted] (b4) to (9,0);

        \node [circle,fill=blue!30] at (2,-2) (o2) {$o_{t-1}$};
        \node [circle,fill=blue!30] at (4.5,-2) (o3) {$o_t$};
        \node [circle,fill=blue!30] at (7,-2) (o4) {$o_{t+1}$};

        \draw[solid] (b2) to (o2);
        \draw[solid] (b3) to (o3);
        \draw[solid] (b4) to (o4);

        \draw[->, bend left] (b3) to (o3);
        \draw[->, bend left] (o3) to (b3);

        \node[text width=1 cm,  black] at (5.6, -1.0) {$\xi_t(o_t)$};
        \node[text width=1 cm,  black] at (3.5, -1.0) {$\gamma_t(x_t)$};

        \draw[->, bend left] (b2) to (b3);
        \draw[->, bend right] (b4) to (b3);

        \node[text width=1 cm,  black] at (3.3, 0.7) {$\alpha_t(x_t)$};
        \node[text width=1 cm,  black] at (5.8, 0.7) {$\beta_t(x_t)$};
    \end{tikzpicture}
    \caption{Illustration of the CFB algorithm.}
    \label{fig:alg-afb}
\end{figure}

\section{Main Results}\label{sec:main}

We consider filtering problems for aggregate HMMs with continuous observations. More specifically, we consider an inference problem similar to Problem \ref{prob:bethe} but assume the observation space of the underlying HMM to be a subset of the Euclidean space, i.e., $\cO\subset\mR^\ell$. An important instance of this observation model corresponds to the Gaussian measurement noise, that is, $p(o_t | x_t) = \mathcal{N}(\mu(x_t),\Sigma(x_t))$ with $\mathcal{N}(\mu,\Sigma)$ denoting the probability density with mean $\mu$ and covariance $\Sigma$.

\subsection{Aggregate inference with Continuous observations}\label{sec:continuous}
We start with an ideal setting when the full distributions of the observations are given. More precisely, we assume the observations are the full distributions $y_t(o_t),\,o_t\in \cO$. In this scenario, the aggregate inference can again be formulated by Problem \ref{prob:bethe} with two modifications due to the continuous observations. First, the constraints \eqref{eq:cons_simplex}-\eqref{eq:cons_tilde_tt_t} become
\begin{subequations}
    \begin{align}
        \int_{o \in \cO} \tilde{n}_{t}(o)do = 1,\quad \sum_{x \in \cX} n_{t}(x)=1 \quad\forall t \in \{1,\cdots, T\} \label{eq:update_cons_simplex} \\
        \int_{o \in \cO} \tilde{n}_{tt}(x,o) do=n_t(x) \quad\forall t \in \{1,\cdots,T\}\label{eq:update_cons_tilde_tt_t}.
    \end{align}
\end{subequations}
Thus, the constraint set $\mathbb{M}$ is described by \eqref{eq:update_cons_simplex}-\eqref{eq:cons_tt_t}-\eqref{eq:update_cons_tilde_tt_t}-\eqref{eq:cons_tilde_tt_o}. Second, the free energy objective becomes
\begin{align}\label{eq:new_bethe}
    \mathcal{F}(\bn) = 
    &- \sum_{t=1}^T \sum_{x_t} \int_{\cO} \tilde{n}_{tt}(x_t,o_t) \log p(o_t | x_t) do_t \\ \nonumber
    &- \sum_{t=1}^{T-1} \sum_{x_t,x_{t+1}} {n}_{tt}(x_t,x_{t+1}) \log p(x_{t+1} | x_t) \\ \nonumber
    & + \sum_{t=1}^T \sum_{x_t} \int_{\cO} \tilde{n}_{tt}(x_t,o_t) \log \tilde{n}_{tt}(x_t,o_t) do_t \\ \nonumber
    &+ \sum_{t=1}^{T-1} \sum_{x_t,x_{t+1}} {n}_{tt}(x_t,x_{t+1}) \log {n}_{tt}(x_t,x_{t+1})\\ \nonumber
    &-\!\! 2\!\!\sum_{t=2}^{T-1}\sum_{x_t} n_t(x_t) \log n_t(x_t) \!\! - \!\!\! \sum_{x_1} n_1(x_1)\log n_1(x_1) \\ \nonumber
    &-\sum_{x_T}n_T(x_T)\log n_T(x_T) - \sum_{x_1} n_{1}(x_1) \log \pi(x_1) .
\end{align}

It turns out that the solution to this aggregate inference problem with continuous observation has a similar characterization as in the discrete setting as in the following Theorem. 
\begin{thm}\label{thm:co-forward-backward}
    The solution to aggregate filtering problem (Problem \ref{prob:bethe}) in an aggregate HMM with continuous observation is
        \begin{equation}\label{eq:marginals_hmm}
            n_t(x_t) \propto \alpha_t(x_t) \beta_t(x_t) \gamma_t(x_t),~~ \forall t =1,\ldots,T 
        \end{equation}
    where $\alpha_t(x_t), \beta_t(x_t),$ and $\gamma_t(x_t)$ are the fixed points of the following updates
    \begin{subequations}\label{eq:forward_backward}
        \begin{eqnarray}
            \alpha_t(x_t) &=& \sum_{x_{t-1}} p(x_t|x_{t-1}) \alpha_{t-1} (x_{t-1}) \gamma_{t-1}(x_{t-1}), \label{eq:forward_backward1} \\
            \beta_t(x_t) &=& \sum_{x_{t+1}} p(x_{t+1}|x_{t}) \beta_{t+1} (x_{t+1}) \gamma_{t+1}(x_{t+1}), \label{eq:forward_backward2} \\
            \gamma_t(x_t) &=& \int_{\cO} p(o_t|x_t)\frac{y_t(o_t)}{\xi_t(o_t)}d{o_t}, \label{eq:forward_backward3} \\
            \xi_t(o_t) &=& \sum_{x_{t}} p(o_{t}|x_{t}) \alpha_{t} (x_{t}) \beta_{t}(x_{t}), \label{eq:forward_backward4}
        \end{eqnarray}
    \end{subequations}
    with boundary conditions
    \begin{equation*}
        \alpha_1(x_1) = \pi(x_1) \quad \text{and} \quad \beta_T(x_T) = 1.
    \end{equation*}
\end{thm}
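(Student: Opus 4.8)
The plan is to treat Problem~\ref{prob:bethe} with the modified objective \eqref{eq:new_bethe} as a convex program and to read off \eqref{eq:marginals_hmm}--\eqref{eq:forward_backward4} from its Karush--Kuhn--Tucker (KKT) conditions, paralleling the discrete derivation behind Theorem~\ref{thm:sbp_hmm} but with every sum over $o_t$ replaced by an integral over $\cO$. First I would settle well-posedness. The linear terms are affine in $\bn$, and each differential-entropy block pairs with its matching cross-entropy term so that, for example, $-\int_{\cO}\tilde n_{tt}(x_t,o_t)\log p(o_t\mid x_t)\,do_t+\int_{\cO}\tilde n_{tt}(x_t,o_t)\log \tilde n_{tt}(x_t,o_t)\,do_t$ is the relative entropy of $\tilde n_{tt}(x_t,\cdot)$ against $p(\cdot\mid x_t)$, which is jointly convex and bounded below. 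Hence $\mathcal{F}$ is convex and attains a minimizer on the affine feasible set $\mathbb{M}\cap\{\tilde\bn_t=\by_t\}$; since the constraints are affine and feasible, Slater's condition holds and the KKT conditions are both necessary and sufficient.

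Next I would form the Lagrangian, attaching multipliers to the transition-consistency constraints \eqref{eq:cons_tt_t}, to the state--observation consistency \eqref{eq:update_cons_tilde_tt_t}, and to the observation constraint $\tilde\bn_t=\by_t$, which via \eqref{eq:cons_tilde_tt_o} reads $\sum_{x}\tilde n_{tt}(x,o_t)=y_t(o_t)$ and whose multiplier is itself a function of $o_t\in\cO$. The core step is stationarity with respect to the density $\tilde n_{tt}(x_t,\cdot)$, taken as a variational derivative. Setting it to zero yields a product form $\tilde n_{tt}(x_t,o_t)\propto p(o_t\mid x_t)\,a_t(x_t)\,b_t(o_t)$, where $a_t$ and $b_t$ are exponentials of the two relevant multipliers. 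Imposing the marginal constraints on $\tilde n_{tt}$ pins these factors down: $\sum_x\tilde n_{tt}(x,o_t)=y_t(o_t)$ forces $b_t(o_t)=y_t(o_t)/\xi_t(o_t)$ with $\xi_t(o_t)=\sum_{x}p(o_t\mid x)a_t(x)$, which is \eqref{eq:forward_backward4}; substituting this into $\int_{\cO}\tilde n_{tt}(x_t,o_t)\,do_t=n_t(x_t)$ gives $n_t(x_t)=a_t(x_t)\int_{\cO}p(o_t\mid x_t)\tfrac{y_t(o_t)}{\xi_t(o_t)}\,do_t$, reproducing $\gamma_t$ in \eqref{eq:forward_backward3} and, once $a_t$ is identified with $\alpha_t\beta_t$, the marginal \eqref{eq:marginals_hmm}.

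It then remains to recover the recursions and boundary conditions. Differentiating with respect to $n_{tt}(x_t,x_{t+1})$ gives the Gibbs form $n_{tt}(x_t,x_{t+1})\propto p(x_{t+1}\mid x_t)\exp(\nu_t(x_t)+\mu_t(x_{t+1}))$ in the transition-consistency multipliers; defining $\alpha_t,\beta_t$ as suitable exponentials of these multipliers and enforcing the row/column sums \eqref{eq:cons_tt_t} turns the consistency constraints into exactly \eqref{eq:forward_backward1}--\eqref{eq:forward_backward2}, while stationarity in $n_t(x_t)$ ties the factor $a_t(x_t)$ to $\alpha_t(x_t)\beta_t(x_t)$. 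The endpoints are singled out by the boundary terms: $-\sum_{x_1}n_1(x_1)\log\pi(x_1)$ produces $\alpha_1=\pi$, and the asymmetric entropy weighting at $t=1,T$ yields $\beta_T=1$. Collecting these identities delivers \eqref{eq:marginals_hmm} together with the fixed-point system \eqref{eq:forward_backward1}--\eqref{eq:forward_backward4}.

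I expect the main obstacle to be the continuous-observation stationarity step rather than the multiplier bookkeeping, which is essentially identical to the discrete case. One must justify differentiating under the integral sign, confirm that the minimizing $\tilde n_{tt}(x_t,\cdot)$ is an integrable density so that $\gamma_t(x_t)=\int_{\cO}p(o_t\mid x_t)\tfrac{y_t(o_t)}{\xi_t(o_t)}\,do_t$ is finite and well-defined, and treat the multiplier for $\tilde\bn_t=\by_t$ as an element of the dual of a function space rather than a finite vector. The relative-entropy reformulation noted above is exactly what keeps the differential-entropy term—which, unlike its discrete counterpart, is not bounded below on its own—under control, and it is what guarantees that the stationarity equation has the clean product-form solution on which the whole argument rests.
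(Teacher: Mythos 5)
Your proposal matches the paper's own proof in all essentials: both form the Lagrangian of the free-energy program with multipliers on \eqref{eq:update_cons_tilde_tt_t}, \eqref{eq:cons_tilde_tt_o}, \eqref{eq:cons_tt_t} and the observation constraint, obtain product-form stationarity solutions for $\tilde n_{tt}(x_t,o_t)$ and $n_{tt}(x_t,x_{t+1})$, and identify $\alpha_t,\beta_t,\gamma_t,\xi_t$ as exponentials of multipliers pinned down by the marginalization constraints, with the factor $e^{-b_t(o_t)}\propto y_t(o_t)/\xi_t(o_t)$ delivering \eqref{eq:forward_backward3} exactly as in \eqref{eq:exp_b}. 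Your added attention to well-posedness (the relative-entropy pairing that bounds the differential entropy, Slater's condition making the KKT conditions sufficient, and treating the multiplier for $\tilde\bn_t=\by_t$ as a function on $\cO$) is sound extra rigor that the paper handles only implicitly, but it does not constitute a different route.
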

\begin{proof}
See Appendix~\ref{app:theorem2}
\end{proof}

\subsection{Continuous Observation Collective Forward-Backward Algorithm}

In Section \ref{sec:continuous}, we assumed that the full distributions of the observations are given, which is hardly the case in real applications. Very often only samples from these marginal distributions are accessible. Of course, one can always estimate a probability density based on these samples and then run the updates \eqref{eq:forward_backward}. The performance of this approach highly depends on that of the density estimators. Another reason that makes this approach undesirable is that the step \eqref{eq:forward_backward3} requires numerical integration which is expensive when the dimension of observation space $\cO$ is large.

A better approach is to rewrite the step \eqref{eq:forward_backward3} as
	\begin{equation}
		\gamma_t(x_t)=\mathbb{E}_{o_t}\left[\frac{p(o_t|x_t)}{\xi_t(o_t)}\right].
	\end{equation}
Let $\bo_t = \{o_t^{(1)},\cdots, o_t^{(M)}\}$ be the aggregate observation at time $t$, then this update formula for $\gamma_t(x_t)$ can be estimated by
\begin{equation}\label{eq:forward_backward3_updated}
    \gamma_t(x_t) = \sum_{o \in \bo_t} \frac{p(o|x_t)}{\xi_t(o)}.
\end{equation}

With this update in hand, we propose continuous observation collective forward-backward~(CO-CFB) algorithm (Algorithm \ref{alg:main}) for solving aggregate inference in aggregate HMMs with continuous observations. 
\begin{algorithm}[h]
    \caption{Continuous Observation Collective Forward-Backward~(CO-CFB) Algorithm}
    \label{alg:main}
    \begin{algorithmic}
        \STATE Initialize all the messages $\alpha_t(x_t), \beta_t(x_t), \gamma_t(x_t), \xi_t(o_t)$
        \WHILE{not converged}
        \STATE \textbf{Forward pass:}
        \FOR{$t = 2,3,\ldots,T$}
        \STATE i) Update  $\gamma_{t-1}(x_{t-1})$ using \eqref{eq:forward_backward3_updated}
        \STATE ii) Update $\alpha_t(x_t), \xi_t(o_t)$ according to \eqref{eq:forward_backward1}, \eqref{eq:forward_backward4}
        \ENDFOR
        \STATE \textbf{Backward pass:}
        \FOR{$t = T-1,\ldots,1$}
        \STATE i) Update  $\gamma_{t+1}(x_{t+1})$ using \eqref{eq:forward_backward3_updated}
        \STATE ii) Update $\beta_t(x_t), \xi_t(o_t)$ according to \eqref{eq:forward_backward2}, \eqref{eq:forward_backward4}
        \ENDFOR
        \ENDWHILE
    \end{algorithmic}
\end{algorithm}

Algorithm \ref{alg:main} can be viewed as a counterpart of the Collective forward-backward algorithm (Algorithm \ref{alg:forward_backward}) for aggregate HMMs with continuous observations. It is almost the same as the latter except for a small difference in the update step for $\gamma_t$, replacing \eqref{eq:d_forward_backward3} by \eqref{eq:forward_backward3_updated}.  For a given observation $\bo_t = \{o_t^{(1)},\cdots, o_t^{(M)}\}$, if we restrict the observation space to $\bo_t$, then the aggregate HMM with continuous observation reduces to an aggregate HMM with discrete observation over the set $\bo_t$. Since all the samples in $\bo_t$ are equally important, the corresponding probability vector measurement $y_t(o_t)$ should be uniform over $\bo_t$. That's why it doesn't appear in \eqref{eq:forward_backward3_updated}. Thus, to some extent, CO-CFB and CFB are equivalent.

\begin{rem}
An interesting special case of HMMs with continuous observations is the aggregate Gaussian mixture models (GMMs). To see this, simply take $T=1$ and let $p(o_1|x_1) = \mathcal{N}(\mu(x_1), \Sigma(x_1))$ be a Gaussian distribution for each $x_1\in \cX$. In this case, we assume the aggregate measurement is available and we are interested in the distribution of the hidden variable $\bn_1$. By setting $T=1$ in Algorithm \ref{alg:main}, we arrive at a closed form solution to aggregate inference problem for GMMs, which reads $$n_1(x_1) \propto \pi(x_1) \sum_{o \in \bo_1} \frac{p(o|x_1)}{\sum_{x_1} p(o|x_1)\pi(x_1)}.$$
Clearly, when $M=1$, this is nothing but the posterior distribution of the latent variable in a GMM.
\end{rem}

\subsection{Connections to Bayesian inference of HMMs}
As discussed in Section \ref{subsec:afb}, for HMMs with discrete observations, the aggregate inference reduces to standard Bayesian inference when the population size is $M=1$, and the CFB algorithm (Algorithm \ref{alg:forward_backward}) reduces to the standard forward-backward algorithm (Algorithm \ref{alg:standard_forward_backward}). It turns out that this equivalent relation remains for HMMs with continuous observations. We remark that the Forward-backward algorithm for HMMs with continuous observation is also known as Wonham filtering \cite{Won64}. 
\begin{thm}
When the population size is ($M=1$), the CO-CFB algorithm (Algorithm \ref{alg:main}) reduces to the standard forward-backward algorithm (Algorithm \ref{alg:standard_forward_backward}).
\end{thm}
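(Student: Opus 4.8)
The plan is to show that, when $M=1$, the message-passing updates of CO-CFB collapse onto those of the standard forward-backward algorithm up to harmless positive scalar factors, so that the resulting state marginals coincide. First I would specialize the aggregate observation: with $M=1$ the set $\bo_t=\{o_t^{(1)}\}$ is a singleton, so writing $o_t:=o_t^{(1)}$ for the (fixed) observation at time $t$, the estimated update \eqref{eq:forward_backward3_updated} becomes simply
\[
\gamma_t(x_t)=\frac{p(o_t|x_t)}{\xi_t(o_t)}.
\]
Substituting this into the $\alpha$- and $\beta$-recursions \eqref{eq:forward_backward1}--\eqref{eq:forward_backward2} is the crux of the argument. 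Since $\xi_{t-1}(o_{t-1})$ (resp.\ $\xi_{t+1}(o_{t+1})$) is a single positive number that does not depend on the state being summed over, it factors out of the sum, yielding
\[
\alpha_t(x_t)=\frac{1}{\xi_{t-1}(o_{t-1})}\sum_{x_{t-1}}p(x_t|x_{t-1})\,\alpha_{t-1}(x_{t-1})\,p(o_{t-1}|x_{t-1}),
\]
and analogously for $\beta_t$, which are precisely the standard recursions \eqref{eq:forward_standard}--\eqref{eq:backward_standard} up to the prefactor.

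The second step is an induction showing that the CO-CFB messages are state-independent scalar multiples of the standard ones. Writing $\alpha_t^{\mathrm{FB}},\beta_t^{\mathrm{FB}}$ for the messages produced by \eqref{eq:forward_standard}--\eqref{eq:backward_standard}, the boundary conditions agree ($\alpha_1=\pi=\alpha_1^{\mathrm{FB}}$ and $\beta_T=1=\beta_T^{\mathrm{FB}}$), so the base cases carry scaling constant $1$. Propagating the factored recursion forward and backward then gives $\alpha_t(x_t)=c_t^{\alpha}\,\alpha_t^{\mathrm{FB}}(x_t)$ and $\beta_t(x_t)=c_t^{\beta}\,\beta_t^{\mathrm{FB}}(x_t)$ for positive constants $c_t^{\alpha},c_t^{\beta}$ that depend on $t$ (through the $\xi$'s) but not on $x_t$.

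Finally I would plug these into the marginal formula \eqref{eq:marginals_hmm}. Using $\gamma_t(x_t)=p(o_t|x_t)/\xi_t(o_t)$ and noting that $\xi_t(o_t)$ is again a state-independent scalar,
\[
n_t(x_t)\propto \alpha_t(x_t)\beta_t(x_t)\gamma_t(x_t)\propto \alpha_t^{\mathrm{FB}}(x_t)\,\beta_t^{\mathrm{FB}}(x_t)\,p(o_t|x_t),
\]
which is exactly the standard posterior \eqref{eq:posterior}; all the $c_t^{\alpha},c_t^{\beta}$ and $\xi_t$ factors are absorbed by the normalization implicit in $\propto$.

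I expect the only genuine subtlety --- the main obstacle --- to be conceptual rather than computational: one must argue that the extra $\xi_t(o_t)$ factors that appear in CO-CFB but are absent from the standard recursion are always state-independent scalars, so that the two algorithms agree not literally message-by-message but \emph{modulo rescaling}, with the rescaling annihilated by the final normalization. A secondary point to address is the iterative-versus-single-pass distinction: CO-CFB iterates to a fixed point whereas Algorithm \ref{alg:standard_forward_backward} performs a single sweep, so I would verify that the scalar-multiple relation is self-consistent at any fixed point --- the $\xi_t$'s are themselves determined by $\alpha_t,\beta_t$ yet enter the recursions only as scalars --- and hence that the fixed-point marginals reproduce the single-pass output.
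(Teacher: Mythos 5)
Your proposal is correct and follows essentially the same route as the paper's proof: specialize \eqref{eq:forward_backward3_updated} to the singleton observation so that $\gamma_t(x_t)=p(o_t|x_t)/\xi_t(o_t)\propto p(o_t|x_t)$, substitute into \eqref{eq:forward_backward1}--\eqref{eq:forward_backward2} to recover the standard recursions \eqref{eq:forward_standard}--\eqref{eq:backward_standard}, and conclude via \eqref{eq:marginals_hmm} that the marginals match \eqref{eq:posterior}. The only difference is that you make explicit two points the paper absorbs silently into the $\propto$ notation --- the inductive bookkeeping of the state-independent $\xi$ rescalings and the self-consistency of the fixed point versus the single sweep --- which is a more careful rendering of the same argument, not a different one.
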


\begin{proof}
When $M=1$, the observation at time $t$ is $\bo_t=\{o_t^{(1)}\}$. The update \eqref{eq:forward_backward3_updated} for $\gamma$ becomes
	\[
		\gamma_t(x_t) = \frac{p(o_t^{(1)}|x_t)}{\xi_t(o_t^{(1)}}\propto p(o_t^{(1)}|x_t).
	\]
With this $\gamma_t$, the update rules \eqref{eq:forward_backward1} and \eqref{eq:forward_backward2} take the form
    \begin{subequations}\label{eq:standard_forward_backward}
        \begin{eqnarray}
            \alpha_t(x_t) = \sum_{x_{t-1}} p(x_t|x_{t-1}) \alpha_{t-1} (x_{t-1}) p(o^{(1)}_{t-1} | x_{t-1}), \label{eq:standard_forward_backward1}  \\
            \beta_t(x_t) = \sum_{x_{t+1}} p(x_{t+1}|x_{t}) \beta_{t+1} (x_{t+1}) p(o^{(1)}_{t+1} | x_{t+1}), \label{eq:standard_forward_backwar2}
        \end{eqnarray}
    \end{subequations}
and the expression for the marginal distributions $\bn_t$ becomes
    \begin{equation} \label{eq:proof-marginal}
        n_t(x_t) \propto \gamma_t(x_t)\alpha_t(x_t) \beta_t(x_t) = p(o^{(1)}_t | x_{t}) \alpha_t(x_t) \beta_t(x_t).
    \end{equation}
These are exactly the same as the Forward-backward algorithm in \eqref{eq:forward_backwardst}-\eqref{eq:posterior}.
\end{proof}

\begin{figure}[h]
        \centering
    \begin{subfigure}[b]{0.38\textwidth}
        \centering
        \includegraphics[width=0.8\textwidth]{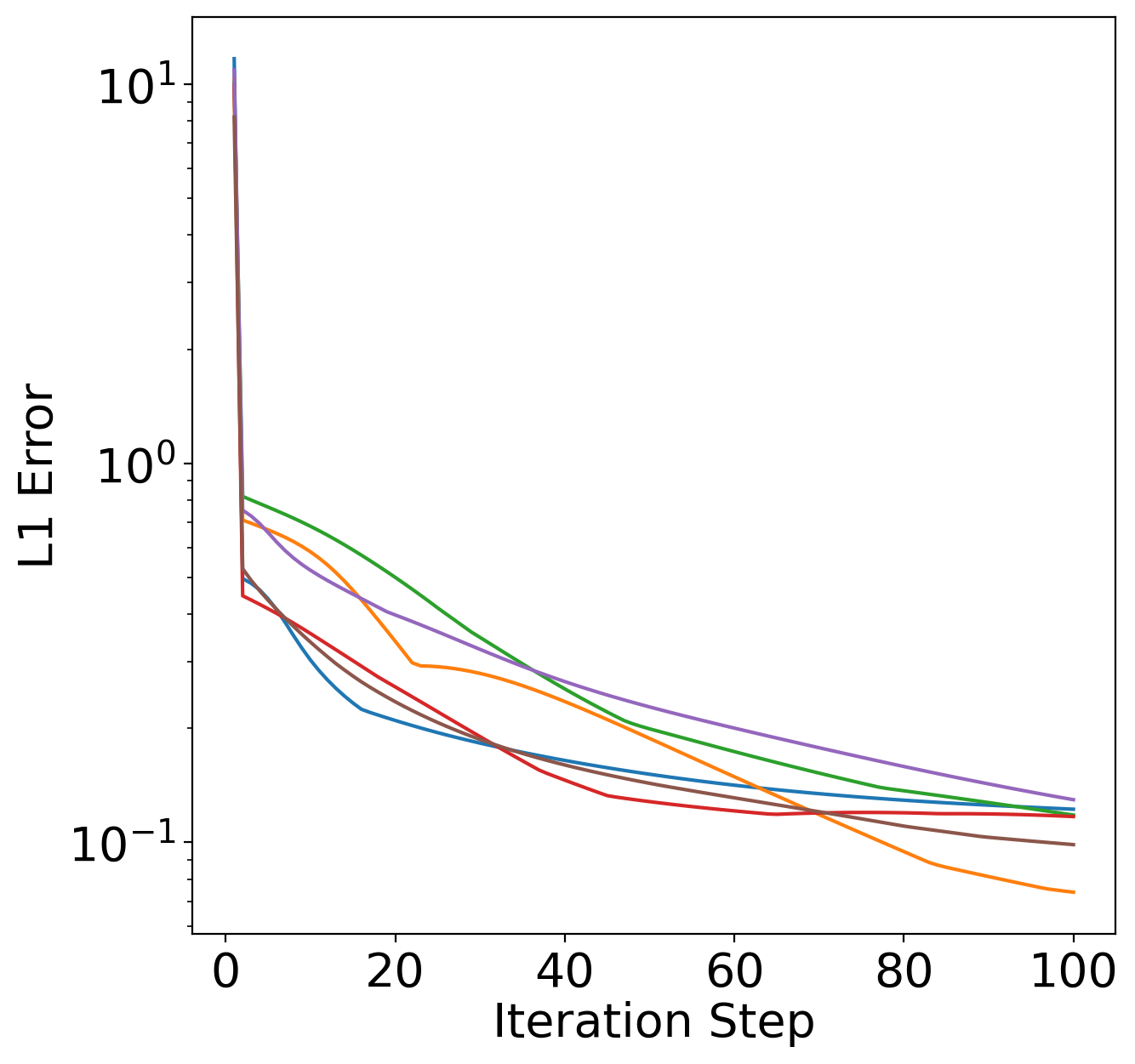}
        \caption{$M=200$}
        \label{fig:error_M200}
    \end{subfigure}
    \begin{subfigure}[b]{0.38\textwidth}
        \centering
        \includegraphics[width=0.8\textwidth]{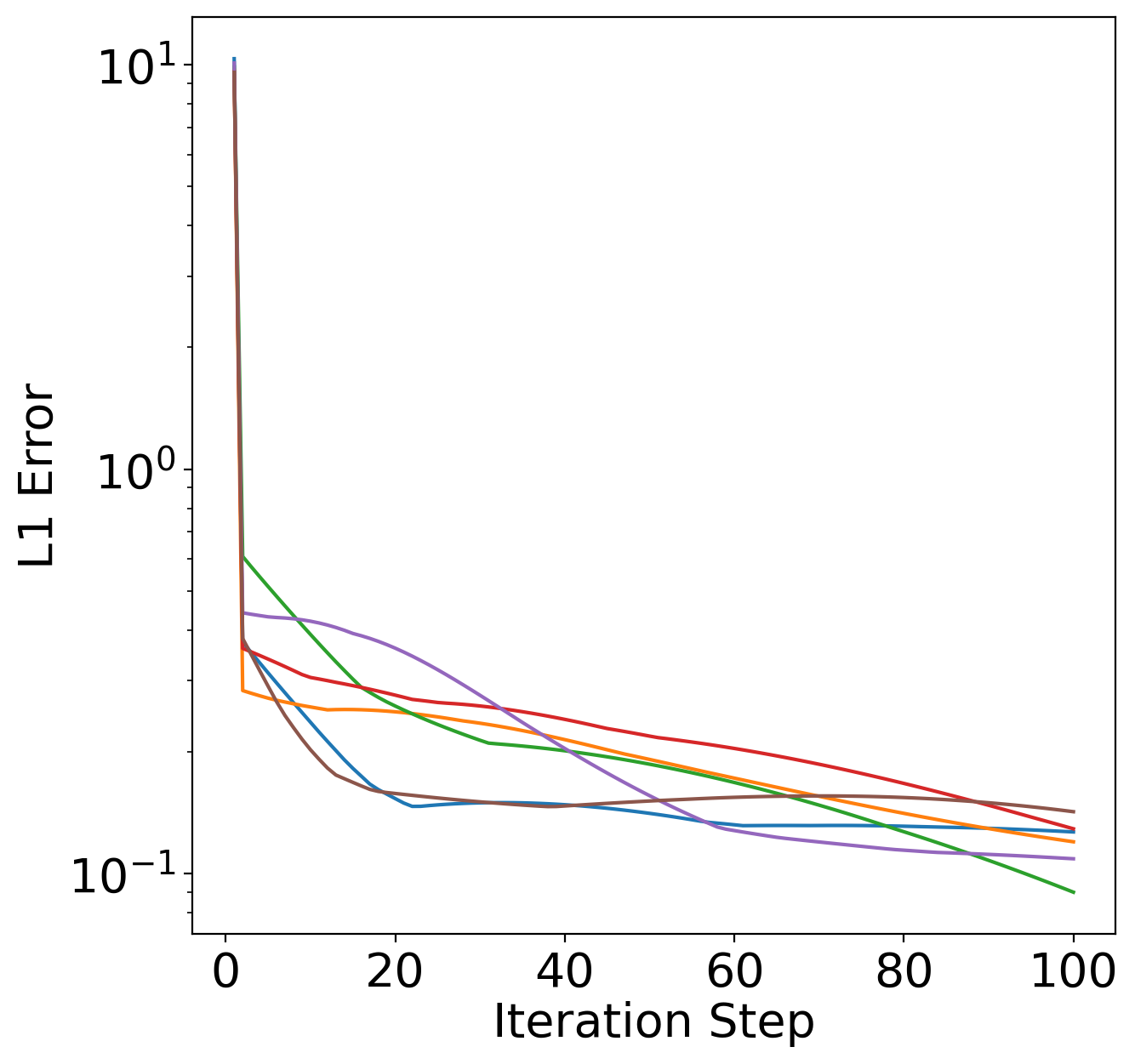}
        \caption{$M=500$}
        \label{fig:error_M500}
    \end{subfigure}  
        \caption{Convergence of estimation error for different population size $M$. The HMM length and number of state are fixed to be $T=20, d=20$. The curves in different color represent results with different randomly generated initial probability, transition probability and emission probability.}
        \label{fig:error}
 \end{figure}
\section{Numerical Examples} \label{sec:exp}
We conduct several synthetic experiments to evaluate the performance of our CO-CFB algorithm. 
We consider HMMs with Gaussian observations. The initial state probability $\pi$ is sampled uniformly over the probability simplex. The transition matrix is generated from a random permutation of a perturbed Identity matrix $\mathcal{I} + 0.05 \times \sqrt{d} \times \exp(Uniform[-1,1])$. A normalization is needed to ensure each row is a valid conditional distribution. The Gaussian emission probability for each hidden state is parameterized by a random mean and variance. The mean is sampled from $Uniform[-5d,5d]$ and the variance is sampled from $Uniform[1,5]$. 

We first demonstrate that our algorithm gives excellent estimation of the distributions of the hidden states of aggregate HMMs. We randomly generate $M$ trajectories from the underlying HMM and produce the observations $\bo_t,~t=1,\ldots, T$ based on these trajectories. We then use CO-CFB to estimate the hidden state distribution $\bn_t$. This is compared to the ground truth $\bn_t^*$ with respect to 1-norm
\[
    Error = \sum_{t=1}^T \| \bn_t - \bn^*_t\|_1.
\]
 Figure \ref{fig:error} depicts the estimation error for two different population size. Clearly, the estimation error goes to very small values in both cases.

To evaluate the efficiency of our algorithm, we test it over different state dimension $d$ and HMM length $T$. 
We terminate the algorithm when the relative change of error is less than $1\times10^{-5}$. 
The result is displayed in Figure \ref{fig:comparision}, from which we observe that the CO-CFB algorithm has great scalability. We also validate the linear dependency of computation complexity on population size $M$ in Figure \ref{fig:comparision_M}.

\begin{figure}[h]
    \centering
    \begin{subfigure}[b]{0.24\textwidth}
        \centering
        \includegraphics[width=1.0\textwidth]{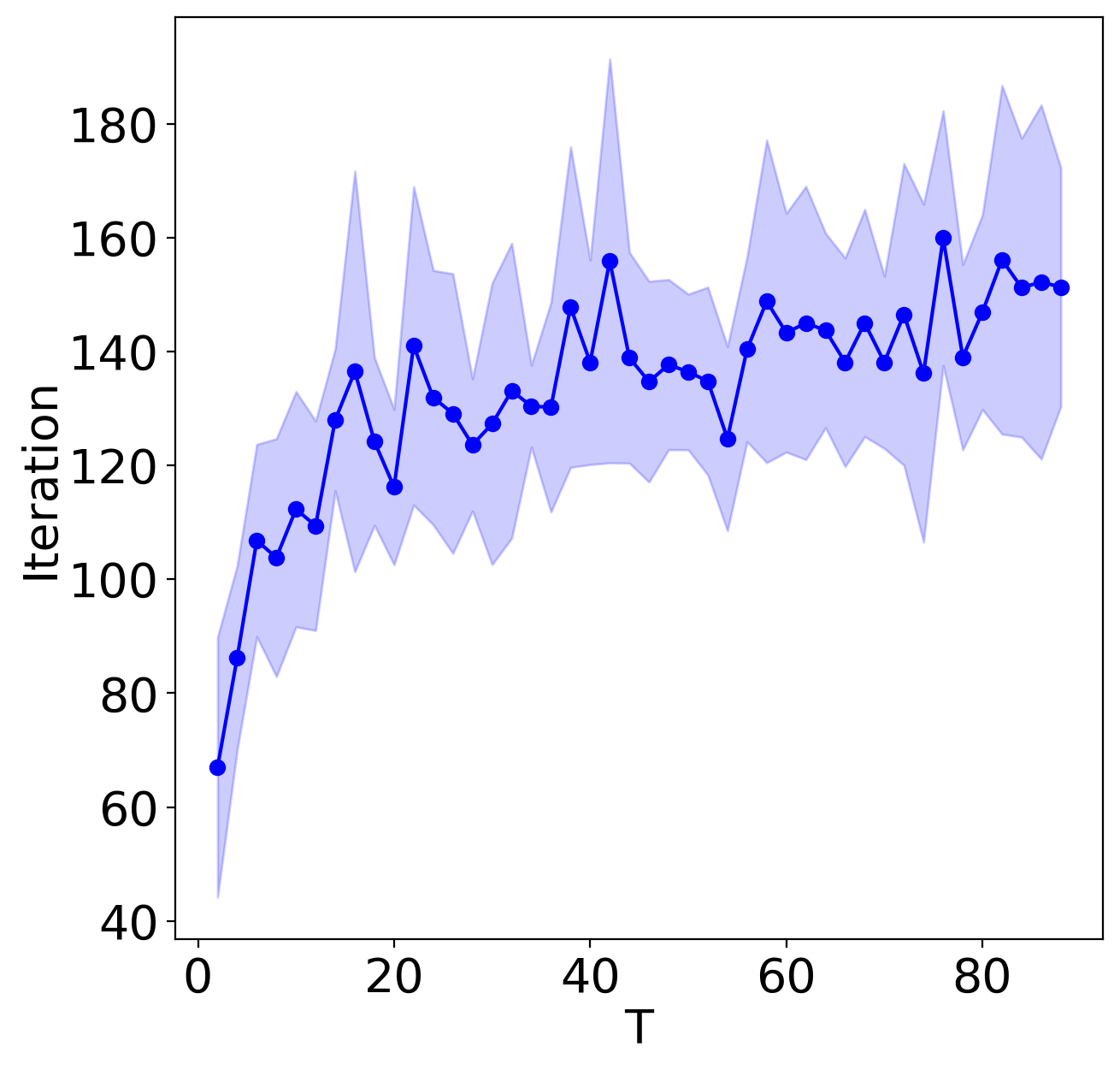}
        \caption{}
        \label{fig:length_step}
    \end{subfigure}
    \begin{subfigure}[b]{0.24\textwidth}
        \centering
        \includegraphics[width=1.0\textwidth]{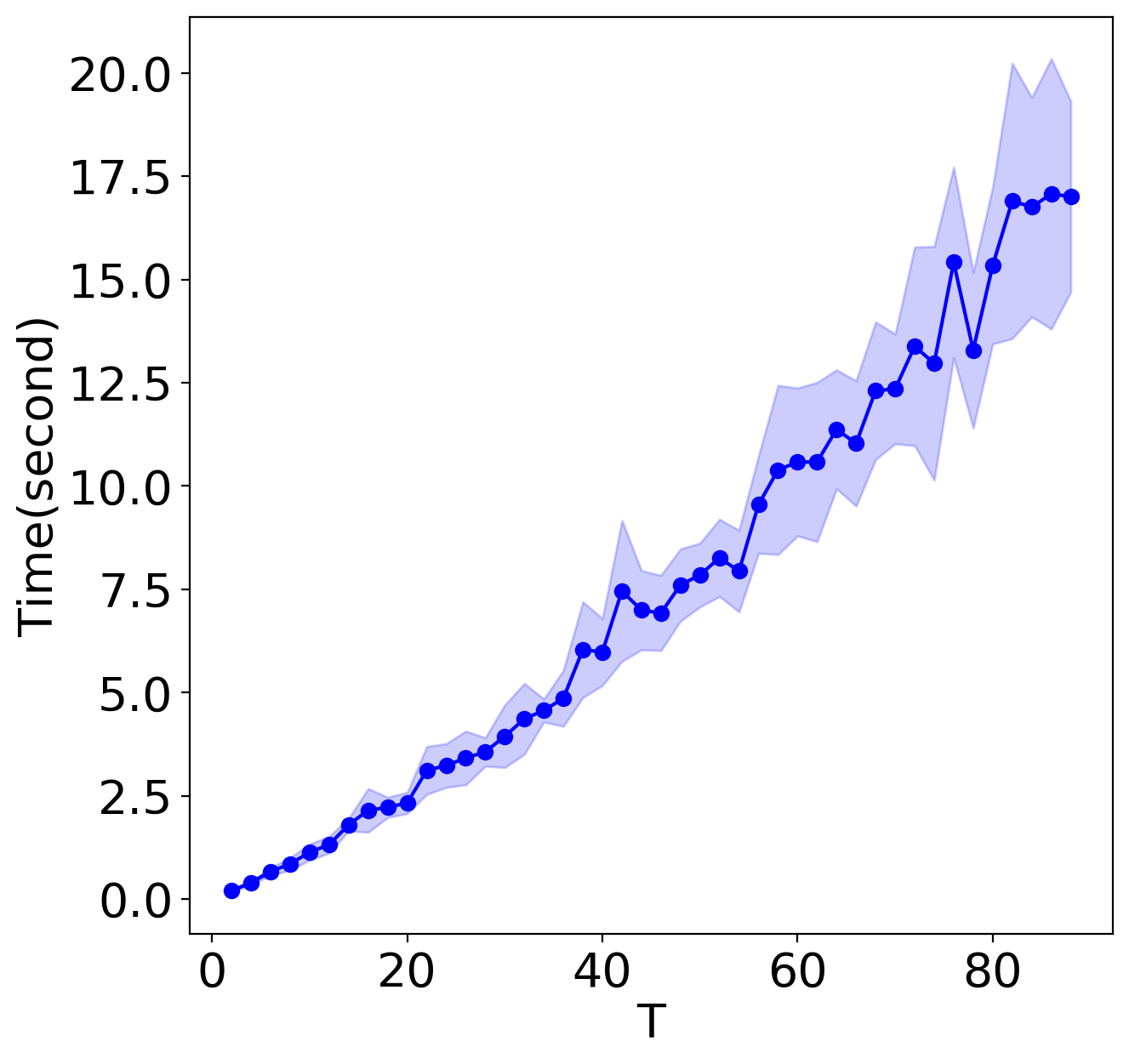}
        \caption{}
        \label{fig:length_time}
    \end{subfigure}

    \begin{subfigure}[b]{0.24\textwidth}
        \centering
        \includegraphics[width=1.0\textwidth]{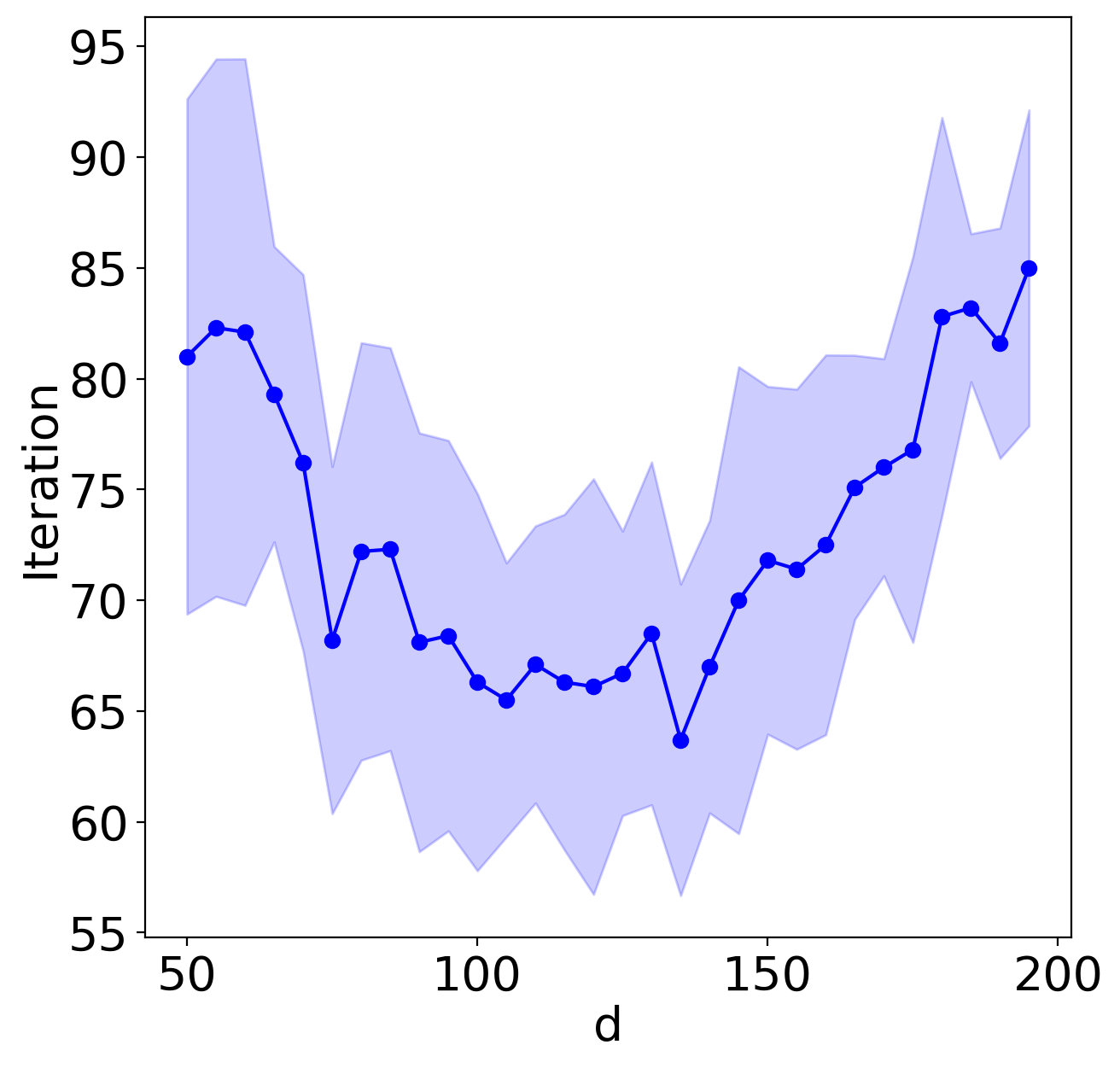}
        \caption{}
        \label{fig:state_step}
    \end{subfigure}
    \begin{subfigure}[b]{0.24\textwidth}
        \centering
        \includegraphics[width=1.0\textwidth]{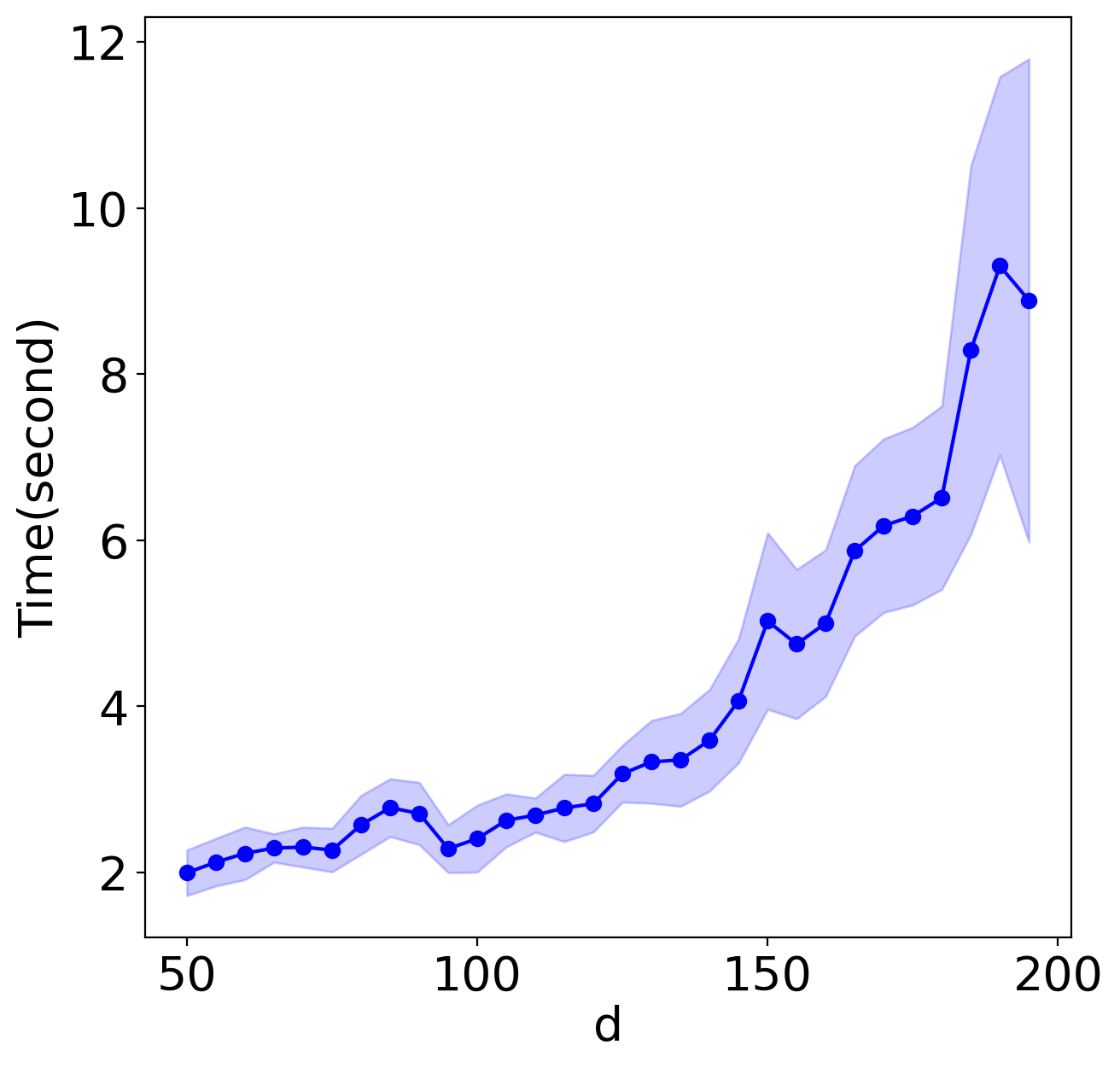}
        \caption{}
        \label{fig:state_time}
    \end{subfigure}
    \caption{
    (a) Number of iterations and (b) total computation time with different $T$ and fixed $d=20$, $M=200$; (c)-(d) are similar but with different $d$ and fixed $T=20$ , $M=200$.
    Each set of parameters is tested with 10 different random seeds and the plots show both means and variances.
    }
    \label{fig:comparision}
\end{figure}

\begin{figure}[h]
    \centering
    \begin{subfigure}[b]{0.24\textwidth}
        \centering
        \includegraphics[width=1.0\textwidth]{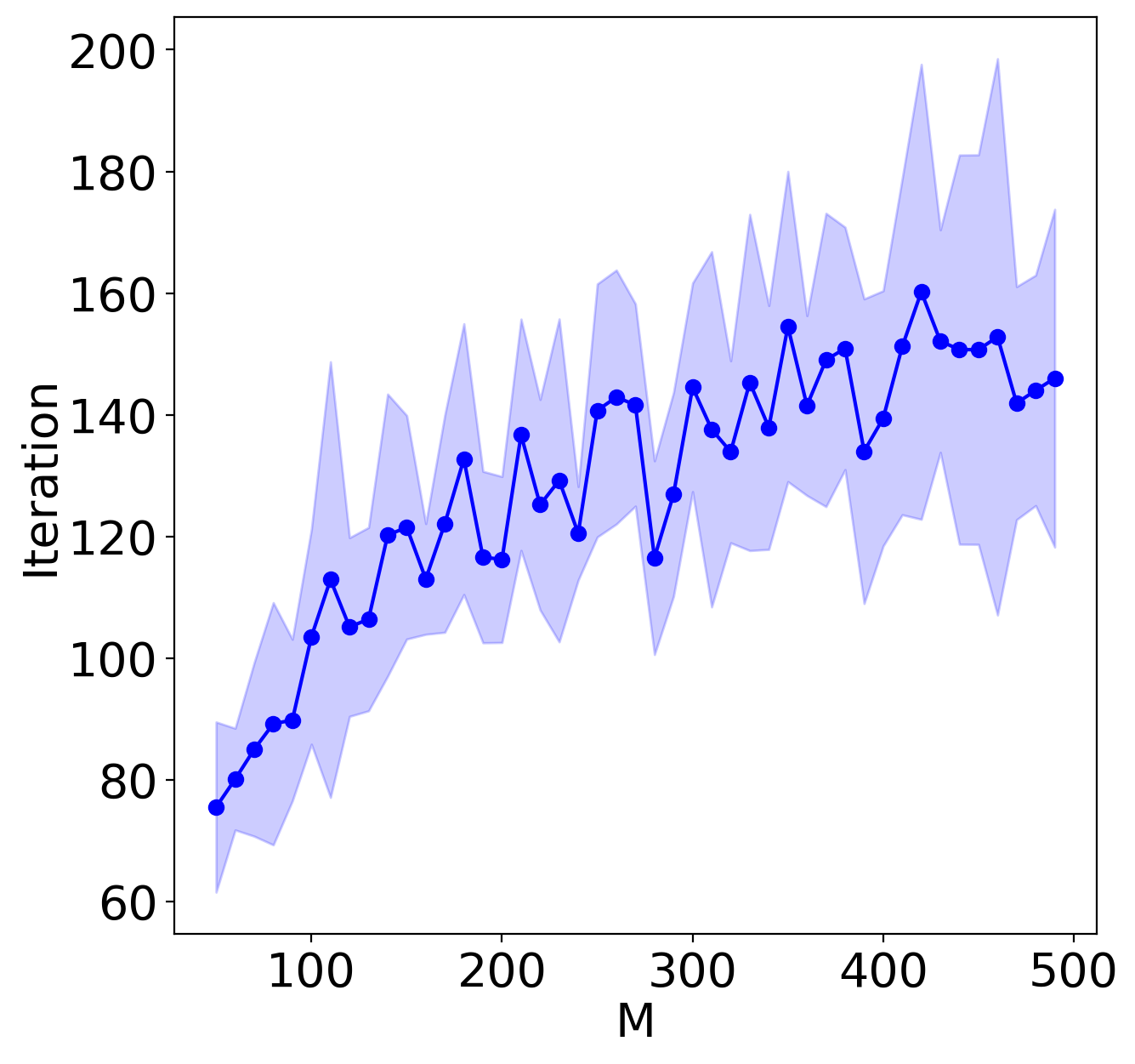}
        \caption{}
        \label{fig:M_step}
    \end{subfigure}
    \begin{subfigure}[b]{0.24\textwidth}
        \centering
        \includegraphics[width=1.0\textwidth]{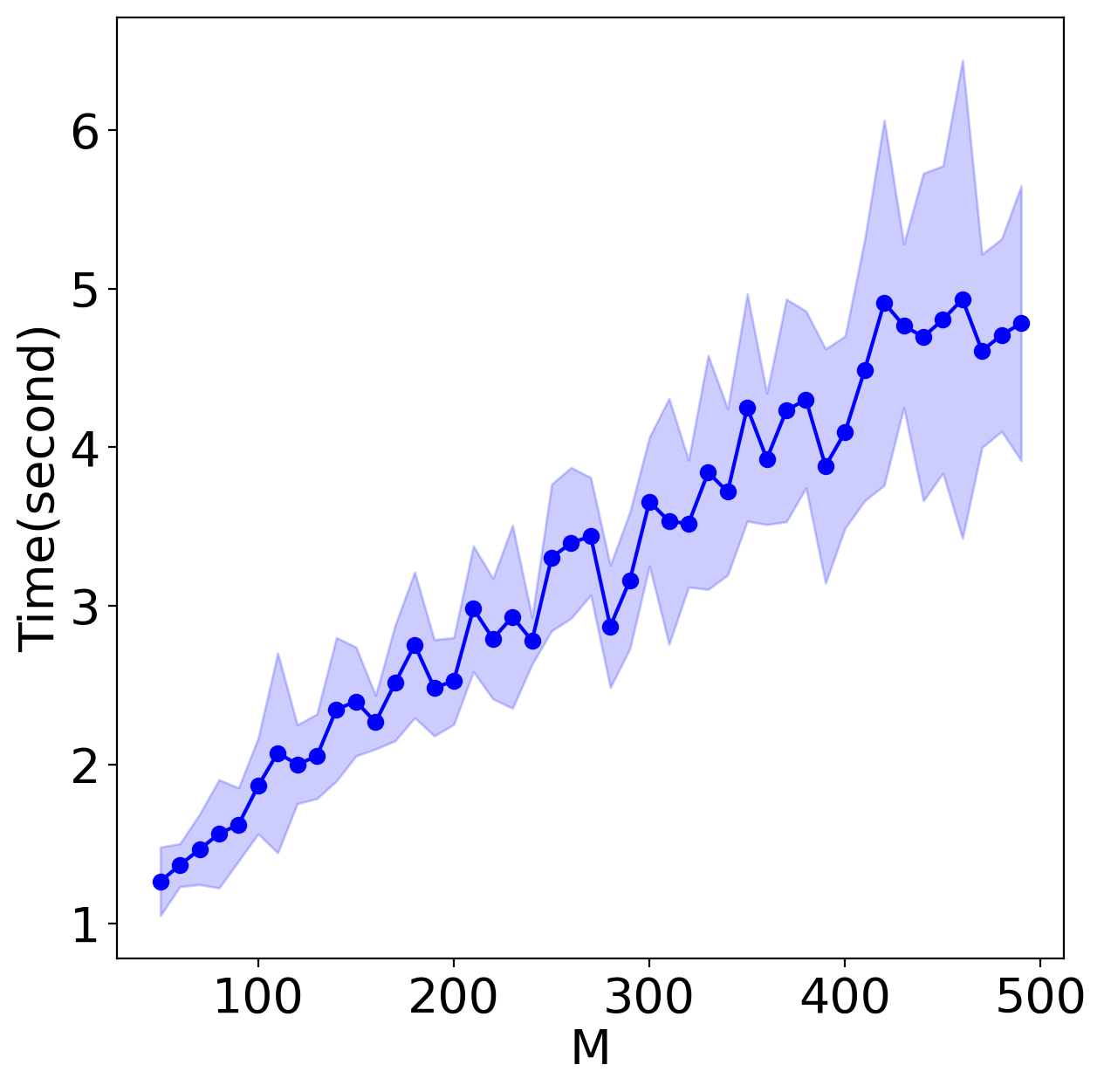}
        \caption{}
        \label{fig:M_time}
    \end{subfigure}
    \caption{
    (a) Number of iterations and (b) total computation time with increasing $M$ under fixed $d=20, T=20$ HMM graph.
    }
    \label{fig:comparision_M}
\end{figure}

\section{Conclusion} \label{sec:conclusion}
In this paper, we proposed an efficient algorithm to solve aggregate inference problems for HMMs with continuous observations. This algorithm is based on the collective forward-backward \cite{SinHaaZha20} algorithm which was designed for aggregate inference for HMMs with discrete observations. The latter was extended to the setting with continuous observations in this work through a reformulation of a key step in the algorithm which enables updating using samples from the observations. 
Note that though our algorithm is developed for HMM models, the same idea can be generalized to a large class of graphical models. This idea will be explored as a part of our future research.

\bibliography{refs}{}
\bibliographystyle{IEEEtran}


\appendix

\subsection{Proof of Theorem~\ref{thm:co-forward-backward}}
\label{app:theorem2}
\begin{proof}
Introducing Lagrange multipliers $a_t(x_t), b_t(o_t), c_t(x_t)$, $d_t(x_t),e_t(o_t),f_{1t}, f_{2t}$ for constraints Equation \eqref{eq:update_cons_tilde_tt_t},\eqref{eq:cons_tilde_tt_o}, \eqref{eq:cons_tt_t}, \eqref{eq:update_cons_simplex}, we arrive at the Lagrangian
    \begin{equation}
        \begin{aligned}
            \mathcal{L} = & \mathcal{F}(\bn) + \sum_{t=1}^{T} \sum_{x_t} a_t(x_t)[\int_{\cO} \tilde{n}_{tt}(x_t,o_t) do_t - n_t(x_t)] \\
& + \sum_{t=1}^{T} \int_{\cO} b_t(o_t)[\sum_{x_t} \tilde{n}_{tt}(x_t,o_t) - \tilde{n}_t(o_t)]do_t           \\
                          & + \sum_{t=1}^{T-1} \sum_{x_t} c_{t+1}(x_{t+1})[\sum_{x_t} n_{tt}(x_t,x_{t+1}) - n_{t+1}(x_{t+1})]         \\
                          & + \sum_{t=1}^{T-1} \sum_{x_t} d_{t}(x_{t})[\sum_{x_{t+1}} n_{tt}(x_t,x_{t+1}) - n_{t}(x_{t})]             \\
                          & + \sum_{t=1}^{T} \int_{\cO} e_t(o_t)[\tilde{n}_t(o_t)- y_t(o_t)]do_t                                      \\
                          & + \sum_{t=1}^{T} \{ f_{1t}[\sum_{x_t} n_{t}(x_t) -1] + f_{2t}[\int_{\cO} \tilde{n}_t(o_t)do_t -1] \}.
        \end{aligned}
    \end{equation}
    Setting the derivatives of the Lagrangian with respect to the $\{\bn_t,\tilde{\bn}_t,\bn_{tt}, \tilde{\bn}_{tt}\}$ to zero, we obtain the optimality conditions
    \begin{subequations}
        \begin{align}
             & \begin{cases}
                - 1 - \log n_t(x_t) - a_t(x_t) - d_t(x_t) + f_{1t}= 0,t=1           \\
                -1 - \log n_t(x_t) - a_t(x_t) - c_t(x_t) + f_{1t}= 0,t = T \\
                -2 -2\log n_t(x_t) -a_t(x_t) - c_t(x_t) - d_t(x_t) + f_{1t}= 0, \\
                \hspace{150pt minus 1fil} \quad \quad \quad t=2\cdots T-1  \hfilneg
\end{cases} \label{eq:lag_n_t}\\
& -b_t(o_t) + e_t(o_t) + f_{2t} = 0\label{eq:lag_tilde_n_t}\\
& \tilde{n}_{tt}(x_t,o_t) = p(o_t | x_t)e^{-a(x_t)-b(o_t)} \label{eq:lag_n_tt}   \\
& n_{tt}(x_t,x_{t+1}) = p(x_{t+1} | x_t)e^{-c(x_{t+1})-d(x_t)} \label{eq:lag_tilde_n_tt}.
        \end{align}
    \end{subequations}
Define, for all $t=1,2,\cdots,T$,
    \begin{subequations} \label{eq:lag_define}
        \begin{align}
            \alpha_t(x_t) = \sum_{x_{t-1}} p(x_t | x_{t-1}) e^{-d_t(x_{t-1})} \label{eq:define_alpha} \\
            \beta_t(x_t) = \sum_{x_{t+1}} p(x_{t+1} | x_t) e^{-c_t(x_{t+1})}  \label{eq:define_beta} \\
            \xi_t(o_t) = \sum_{x_t} p(o_t | x_t) e^{-a_t(x_t)}  \label{eq:define_xi} \\
            \gamma_t(x_t) = \int_{\cO} p(o_t | x_t)e^{-b_t(o_t)}do_t \label{eq:define_gamma}. 
        \end{align}
    \end{subequations}
Next we present the case with $t=2,\cdots, T-1$ in \eqref{eq:lag_n_t}; the other two cases with $t=1$ or $t=T$ can be analyzed similarly. It follows directly from \eqref{eq:lag_n_t} that
    \begin{equation}\label{eq:nt}
        n_t(x_t) \propto e^{-\frac{a_t(x_t)+c_t(x_t)+d_t(x_t)}{2}}.
    \end{equation}
Plugging \eqref{eq:nt} and \eqref{eq:lag_tilde_n_tt} into $n_t(x_t)n_t(x_t) = [\sum_{x_{t-1}}n_{t-1,t-1}(x_{t-1},x_t)][\sum_{x_{t+1}}n_{t t}(x_{t},x_{t+1})]$, in view of \eqref{eq:lag_define}, we arrive at
    \begin{equation} \label{eq:exp_a}
        e^{-a_t(x_t)}  \propto \alpha_t(x_t) \beta_t(x_t).
    \end{equation}
Plugging \eqref{eq:nt}, \eqref{eq:define_xi} and \eqref{eq:lag_tilde_n_tt} into $n_t(x_t)n_t(x_t) = [\sum_{x_{t+1}}n_{t t}(x_{t},x_{t+1})][\int_{\cO} \tilde{n}_{tt}(x_t,o_t)do_t]$, in view of  \eqref{eq:lag_define}, we get
    \begin{equation} \label{eq:exp_c}
        e^{-c_t(x_t)} \propto \beta_t(x_t) \gamma_t(x_t).
    \end{equation}
Similarly, by $n_t(x_t)n_t(x_t) = [\sum_{x_{t-1}}n_{t-1,t-1}(x_{t-1},x_t)]$
$[\int_{\cO} \tilde{n}_{tt}(x_t,o_t)do_t]$ we obtain
    \begin{equation} \label{eq:exp_d}
        e^{-d_t(x_t)} \propto \alpha_t(x_t) \gamma_t(x_t).
    \end{equation}
 
Clearly, \eqref{eq:marginals_hmm} is a direct consequence of \eqref{eq:nt}-\eqref{eq:exp_a}-\eqref{eq:exp_c}-\eqref{eq:exp_d} by
    \[
        n_t(x_t) \propto e^{\frac{-a_t(x_t)-c_t(x_t)-d_t(x_t)}{2}} \propto \alpha_t(x_t)\beta_t(x_t)\gamma_t(x_t).
    \]
Moreover, \eqref{eq:forward_backward1} follows by combining \eqref{eq:exp_d} and \eqref{eq:define_alpha}, \eqref{eq:forward_backward2} follows by combining \eqref{eq:exp_c} and \eqref{eq:define_beta}, and \eqref{eq:forward_backward4} follows by combining \eqref{eq:exp_a} and \eqref{eq:define_xi}.
Finally, by \eqref{eq:lag_n_tt} and in view of \eqref{eq:define_xi}
    \begin{equation}\label{eq:exp_b}
        e^{-b_t(o_t)} = \frac{\sum_{x_t} \tilde{n}_{tt}(x_t,o_t)}{\sum_{x_t} p(o_t|x_t)e^{-a_t(x_t)}} \propto \frac{y_t(o_t)}{\xi_t(o_t)},
    \end{equation}
where in the last step we have utilized the constraint $\sum_{x_t} \tilde{n}_{tt}(x_t,o_t) = y_t(o_t)$. The update \eqref{eq:forward_backward3} then follows by plugging \eqref{eq:exp_b} into \eqref{eq:define_gamma}.
\end{proof}

\end{document}